\documentclass[conference]{IEEEtran}

\usepackage{cite}
\usepackage{booktabs}
\usepackage{amsmath}

\usepackage{amssymb,amsfonts}
\usepackage{amsthm}
\usepackage[ruled,vlined,linesnumbered]{algorithm2e}
\usepackage{graphicx}
\usepackage{multirow}
\usepackage{float}
\usepackage{url}
\usepackage{xspace}
\newcommand{\method}{GraphMNL\xspace}
\newcommand{\plusmethod}{GraphMNL$^{+}$\xspace}

\newcommand{\lossEG}{\mathcal{L}_{\mathrm{EG}}}

\newcommand{\softmax}{\operatorname{softmax}}
\newcommand{\stopgrad}{\operatorname{sg}}
\newcommand{\vect}[1]{\boldsymbol{#1}}
\newcommand{\mat}[1]{\mathbf{#1}}

\newtheorem{assumption}{Assumption}
\newtheorem{lemma}{Lemma}
\newtheorem{proposition}{Proposition}

\begin{document}

\title{Multimodal Graph Negative Learning}

\author{Zhengyu Wu, Xu Wang, Hongchao Qin, Xunkai Li, Guang Zeng, Rong-Hua Li, Guoren Wang}

\maketitle

\begin{abstract}
Multimodal attributed graphs (MAGs) integrate graph topology with heterogeneous modality attributes, such as text and images, thereby enabling richer modeling of complex relational systems. 
However, such expressiveness also makes learning on MAGs depend on multiple semantic sources, including structural topology, textual and visual attributes, each of which can be regarded as a branch for node representation. 
Node-level branch semantic imbalance arises when these branches differ across nodes in semantic informativeness and reliability: a branch that provides discriminative semantics for one node may mislead another due to bias in modality quality or structural context.
Existing methods often mitigate such heterogeneity through cross-branch agreement or alignment, implicitly treating the dominant prediction as reliable supervision. When the dominant branch is biased, forced imitation may propagate its bias to other branches and suppress original semantics that are useful for classification.
We propose GraphMNL, a graph-aware multimodal negative learning framework that addresses this issue by using Negative Learning as cross-branch guidance.
Instead of forcing inferior branches to imitate a teacher prediction, the model teaches them which classes a node is unlikely to belong to. 
GraphMNL builds a branch library, identifies dominant and inferior branches via graph-aware reliability arbitration, gates unstable transfer, and applies target-preserving negative learning over non-target classes. 
This design decouples target supervision from branch guidance so that supervised losses learn the correct class, while Negative Learning suppresses unlikely alternatives when branch agreement is unreliable.
Through the comprehensive experimental evaluation, GraphMNL achieves the best performance on Grocery datasets with 72.47\% accuracy and 76.60 F1 score on Reddit M datasets, with 1.81\% and 3.63\% improvement on the second best baseline respectively.

\end{abstract}

\begin{IEEEkeywords}
multimodal attributed graph, negative learning, graph neural network, modality stress behavior, multimodal fusion
\end{IEEEkeywords}

\section{Introduction}

Multimodal Attributed Graphs (MAGs) are graph-structured data in which nodes and edges are associated with heterogeneous modality attributes, such as textual descriptions and visual content. Unlike conventional graph datasets, which mainly characterize entities through relational topology and compact numerical attributes, MAGs jointly model structural dependencies and modality-specific semantics, thereby providing a more expressive representation of graph entities. This richer representational capacity enables MAGs to capture the heterogeneous and interdependent information patterns commonly observed in complex real-world scenarios, and has therefore supported broad applications in recommendation, product search, social media analysis, scientific discovery, and content understanding~\cite{he2017neuralcf,hou2024bridging,e_commerce1}.

However, the same semantic heterogeneity that makes MAGs expressive also gives rise to a critical challenge, which we refer to as node-level branch semantic imbalance. 
A branch denotes a distinct semantic source used to characterize a node, including structural topology, textual descriptions, and visual content. 
In practice, these branches are rarely equally informative across all nodes.
Specifically, a node may have reliable neighborhood structure but weak textual or visual semantics, whereas another node may rely more heavily on modality attributes due to sparse or noisy structural context. 

This node-dependent imbalance directly affects how branch-specific predictions should be integrated. For example, a product node is better characterized by its image than by a generic description, and a scientific node depends primarily on citation structure or textual abstracts rather than raw visual attributes. 
Thus, even under the same model architecture, the branch that provides discriminative semantics for one node may introduce misleading shortcuts for another. Branch disagreement should therefore not be treated merely as noise to be suppressed; instead, it may indicate that different branches capture different levels of target-relevant semantics, structural bias, modality availability, or local class-frequency patterns. 

Existing MAG methods often address heterogeneous branches through shared supervision, representation alignment, or agreement regularization, thereby encouraging branch-specific predictors to converge toward a common prediction pattern~\cite{wang2019mmgcn,tao2020mgat,jia2023mhgat,guo2024lgmrec,he2025unigraph2}. While such consistency constraints can improve cross-branch coordination, they implicitly assume that the dominant branch provides reliable supervision for the others. This assumption is problematic in MAGs, where a confident prediction may arise from topology shortcuts, modality imbalance, or local class-prior bias rather than target-relevant semantics. In such cases, enforcing consistency may transfer biased semantics across branches and suppress node-specific discriminative cues. This motivates the central research question: \textit{\textbf{How to selectively leverage reliable branch-specific semantics while avoiding the propagation of biased dominant predictions across heterogeneous branches?}}

Recent advances~\cite{wang2019mmgcn,tao2020mgat,guo2024lgmrec,jia2023mhgat,guo2025dmgc,zheng2025dgf,he2025unigraph2,ning2025graph4mm} in MAG learning often overlook whether the transferred semantics are actually reliable at the node level. 
Consequently, these methods still depend on implicit consistency assumptions across branches, making them vulnerable to propagating biased semantics. 
In particular, current approaches face two fundamental training limitations. \textbf{(1) Reckless Imitation on Dominant Branches.} Agreement-based training may be beneficial in standard multimodal settings, but in MAGs it can force a semantically disadvantaged branch to imitate a dominant branch whose confidence is driven by structural bias rather than robust semantic discrimination~\cite{mcpherson200homophily_theory1,M2003Mixing_homophily_theory2,ma2021hete_gnn_survey1,luan2022hete_gnn_survey2,2020h2gcn,2021linkx}, thereby overwriting original semantics that are essential for identifying the target class. \textbf{(2) Node-wise Branch Reliability Mismatch.} Branch reliability often varies substantially across nodes due to corrupted modality inputs, even when such branches still provide highly discriminative semantics for a specific subset of nodes. Branch reliability is therefore not determined by confidence alone, but depends jointly on factors such as class separability, modality availability, neighborhood consistency, and clear distinction from alternative prediction paths~\cite{gong2025mnl,ma2021smil,ma2022missingtransformer,wu2024missingmodalitysurvey}. Consequently, fixed fusion or confidence-based strategies cannot reliably determine branch supervision and semantic transfer.

To address these limitations, we propose \textbf{GraphMNL}, a graph-aware multimodal negative learning framework that converts branch disagreement into reliability-aware negative supervision. Rather than asking an inferior branch to imitate the full prediction of a potentially biased dominant branch, GraphMNL uses Negative Learning to transfer what a node is unlikely to be, while leaving the target class to supervised objectives. Its design follows two principles: \textit{\textbf{Branch reliability should be identified adaptively for each node, and cross-branch transfer should be negative, target-preserving, and robust to biased dominant predictions.}} GraphMNL realizes these principles through four connected modules. First, branch construction disentangles heterogeneous prediction pathways, including raw-modality, graph-structural, and modality-propagated branches, allowing the model to inspect distinct structural and modality-specific semantics instead of collapsing them into a single fused representation. Second, graph-aware reliability arbitration selects a dominant branch only when confidence, class margin, neighborhood consistency, modality availability, and training stability jointly support its prediction. Third, stability gating transforms reliability gaps into dynamic transfer weights, preventing unstable or locally unreliable branches from dominating guidance. Finally, graph-aware exclusion guidance implements negative learning over non-target classes, suppressing unlikely alternatives without requiring full posterior imitation. Experiments show that \method achieves the best performance consistently over competitive baselines with 92.47\% accuracy on the Reddit S datasets and achieve the best efficiency-performance tradeoffs.

Our contributions are: 
\textit{\textbf{(1) Novel Perspective.}} We identify MAG imbalance as both an input quality issue and a prediction reliability issue, and we show why branch disagreement should be used as local training evidence rather than always removed through positive imitation.
\textit{\textbf{(2) New Paradigm.}} We introduce GraphMNL and GraphMNL+, where GraphMNL provides exclusion based branch guidance, while GraphMNL+ further strengthens node adaptive reliability estimation and stability controlled exclusion over non target classes.
\textit{\textbf{(3) SOTA Performance.}} \method demonstrate the best node classification performance with consistent gains from the competitive baselines. \plusmethod exhibit the superior performance and consistence acorss various datasets and tasks. The improvement is supported by reliable branch selection and exclusion guidance, which strengthen hard class discrimination without relying on dense infer objectives.

\section{Related Work}

\subsection{Multimodal Attributed Graph Learning}
MAG benchmarks require topology and multimodal attributes to be evaluated together. MAG studies multimodal attributed graph learning across representative graph tasks~\cite{yan2024magb}, while MM Graph broadens benchmark coverage for multimodal graph learning~\cite{zhu2025mmgraph}. These benchmarks show that useful MAG models must support both structural reasoning and semantic evidence from text, images, and metadata.

MAG model families are especially useful for positioning GraphMNL. Graph enhanced methods disentangle homophilic and heterophilic multimodal graph signals or apply graph filtering and denoising~\cite{guo2025dmgc,zheng2025dgf}. Multimodal enhanced methods build modality specific graph convolution, modality aware attention, heterogeneous multimodal attention, or local global graph learning~\cite{wang2019mmgcn,tao2020mgat,jia2023mhgat,guo2024lgmrec}. Foundation style methods further connect graph structure with large multimodal encoders through unified embeddings, graph conditioned multimodal learning, multimodal language and graph assistants, graph comprehension and generation models, and self teaching graph transformers~\cite{he2025unigraph2,ning2025graph4mm,MLaGA,GraphGPT-O,NTSFormer}.

These methods strengthen encoders, but they mostly answer how to represent graph and semantic evidence rather than how branches should teach one another when evidence conflicts. GraphMNL therefore keeps branch predictions explicit, estimates reliability for each node, and transfers only exclusion knowledge from reliable branches.

\subsection{Modality Imbalance and Negative Learning}
Incomplete modality and modality imbalance methods show why agreement across views can be unreliable. SMIL, Missing Modality Transformer, ShaSpec, and MMANet handle absent or incomplete channels~\cite{ma2021smil,ma2022missingtransformer,wang2023shaspec,zhao2023mmanet}, while federated multimodal methods address distributed modality heterogeneity through foundation models, cross modal aggregation, contrastive regularization, or computational pathology settings~\cite{che2024fedmvp,peng2024fedmm}.

Negative learning is especially relevant because it teaches what a sample is unlikely to be, originating from complementary label learning where classifiers are trained to avoid specified incorrect classes rather than to predict the correct one~\cite{ishida2017complementary,yu2018multiplecl}. The paradigm was later extended to noisy-label robustness under the name negative learning, where suppressing unlikely classes reduces the risk of overfitting to corrupted labels~\cite{kim2019nlnl}. Most recently, multimodal negative learning (MNL)~\cite{gong2025mnl} generalized this idea to multimodal settings with missing or unreliable modalities. MAGs add risks specific to graphs because propagation, degree, homophily, and neighborhoods can make a branch confident for structural rather than semantic reasons. GraphMNL therefore makes negative learning graph aware through reliability based role assignment, stability gating, and guidance only over classes outside the target.

\section{Preliminaries}

This section fixes the notation used by GraphMNL for MAG node classification, branch centered prediction paths, graph aware reliability, and exclusion guidance. It is a compact contract for the Method section rather than a separate background survey.

\subsection{MAG Task Setting}

Following MAG benchmarks~\cite{openmag2026,zhu2025mmgraph,yan2024magb}, a multimodal attributed graph is $\mathcal{G}=(\mathcal{V},\mathcal{E},\mat{A},\allowbreak\{\mat{X}^{m}\}_{m\in\mathcal{M}},\allowbreak\{\mat{Z}^{r}\}_{r\in\mathcal{R}})$, where $\mathcal{V}$ is the node set, $\mathcal{E}$ is the edge set, $\mat{A}$ is the adjacency matrix, $\mat{X}^{m}$ stores node attributes for modality $m$, and $\{\mat{Z}^{r}\}_{r\in\mathcal{R}}$ denotes optional edge features. The active task is node classification with labeled nodes $\mathcal{V}_L$, unlabeled nodes $\mathcal{V}_U$, and labels $y_i\in\{1,\ldots,C\}$.

The setting is supervised in its loss definition and semi supervised in graph usage. Labels in $\mathcal{V}_L$ provide target supervision and exclusions for classes outside the target, while nodes in $\mathcal{V}_U$ do not directly contribute supervised or exclusion guidance terms. However, unlabeled nodes remain in the observed graph, so their features and edges can still affect propagation, neighborhood consistency, and branch reliability.

\subsection{Branch Centered Prediction Paths}

GraphMNL uses branches to keep each information view explicit before transfer between branches is applied. A branch is a task supervised prediction path built from one view of $\mathcal{G}$, such as raw text or image features, graph topology, modality propagated features, cross modal paths, or frequency domain propagation. The auxiliary branch set $\mathcal{B}$ contains the unfused prediction paths used for branch supervision and reliability comparison, while the fused path combines branch evidence and produces the main prediction $\vect{p}_i^{\mathrm{fuse}}$. For node $i$, each branch $b\in\mathcal{B}$ produces a hidden representation $\vect{h}_i^b$, logits $\vect{o}_i^b\in\mathbb{R}^{C}$, and class probabilities:
\begin{equation}
    \vect{p}^{b}_{i} = \softmax(\vect{o}^{b}_{i}).
\end{equation}
These probabilities expose each branch as evidence provider, supervised predictor, candidate teacher, or guided branch. This role separation matters because GraphMNL does not assume the most confident branch is always the best teacher.

\subsection{Graph Aware Reliability Interface}

For each node and branch, GraphMNL summarizes prediction evidence and graph context before assigning teacher roles. The predicted class is $\hat{y}_i^b=\arg\max_c(\vect{p}_i^b)_c$, the confidence is $q_i^b=\max_c(\vect{p}_i^b)_c$, and the top two gap $\Delta_i^b$ measures separation from the nearest competing class. Reliability also depends on neighborhood consistency $\eta_i^b$, modality availability $a_i^b$, temporal stability $\sigma_i^b$, and an uncertainty penalty $u_i^b$ that can reflect degree effects, prediction variance, and disagreement with the fused path. These cues form a graph aware reliability score $\rho_i^b$ in Method.

Teacher eligibility is label aware on $\mathcal{V}_L$. Let $\mathcal{C}_i=\{b\in\mathcal{B}:\hat{y}_i^b=y_i\}$ be the candidate teacher set for labeled node $i$. Branches outside $\mathcal{C}_i$ may still learn from supervision, but they cannot serve as dominant teachers for exclusion guidance on that node.

\subsection{Learning Objectives through GraphMNL's pipeline}

GraphMNL proposes a new training paradigm that constructs auxiliary branches from the branch library above and uses the fused path as the main task predictor. Its auxiliary guidance objective transfers only exclusion rankings over classes outside the target from reliable branches to inferior branches.

\textbf{Supervised target learning.} The fused predictor and auxiliary branches learn the target class from labeled nodes:
\begin{equation}
    \mathcal{L}_{\mathrm{sup}}
    =
    \sum_{i\in\mathcal{V}_L}
    \ell_{\mathrm{CE}}(\vect{p}_i^{\mathrm{fuse}},y_i)
    +
    \mu\sum_{b\in\mathcal{B}}
    \sum_{i\in\mathcal{V}_L}
    \ell_{\mathrm{CE}}(\vect{p}_i^b,y_i).
\end{equation}
This term defines what each labeled node should be and prevents auxiliary branches from becoming passive components of the fused head.

\textbf{Role assignment and exclusion transfer.} For each labeled node with $\mathcal{C}_i\neq\emptyset$, GraphMNL selects a dominant branch $b_i^+$ from eligible teachers and collects inferior branches $\mathcal{I}_i$ whose reliability is sufficiently lower. Exclusion guidance transfers only over the class set outside the target, written as $\mathcal{C}_i^-=\{1,\ldots,C\}\setminus\{y_i\}$, so cross entropy learns target evidence while the teacher supplies an exclusion ranking. Each guided branch receives a dynamic weight $\lambda_i^b(t)$, allowing transfer to depend on reliability gap and teacher stability.

\textbf{Overall objective.} The node classification loss combines supervised learning and graph aware exclusion guidance:
\begin{equation}
    \mathcal{L}
    =
    \mathcal{L}_{\mathrm{sup}}
    +
    \sum_{i\in\mathcal{V}_L}
    \sum_{b\in\mathcal{I}_i}
    \lambda_i^b(t)\lossEG(i,b).
\end{equation}
The objective separates the roles of the losses. Cross entropy learns target decisions for the fused predictor and each branch, while exclusion guidance shapes only alternatives outside the target for inferior branches. The Method section instantiates reliability computation, stable role assignment, and softened distributions over classes outside the target.

\begin{figure*}[t]
\centering
\includegraphics[width=0.95\textwidth]{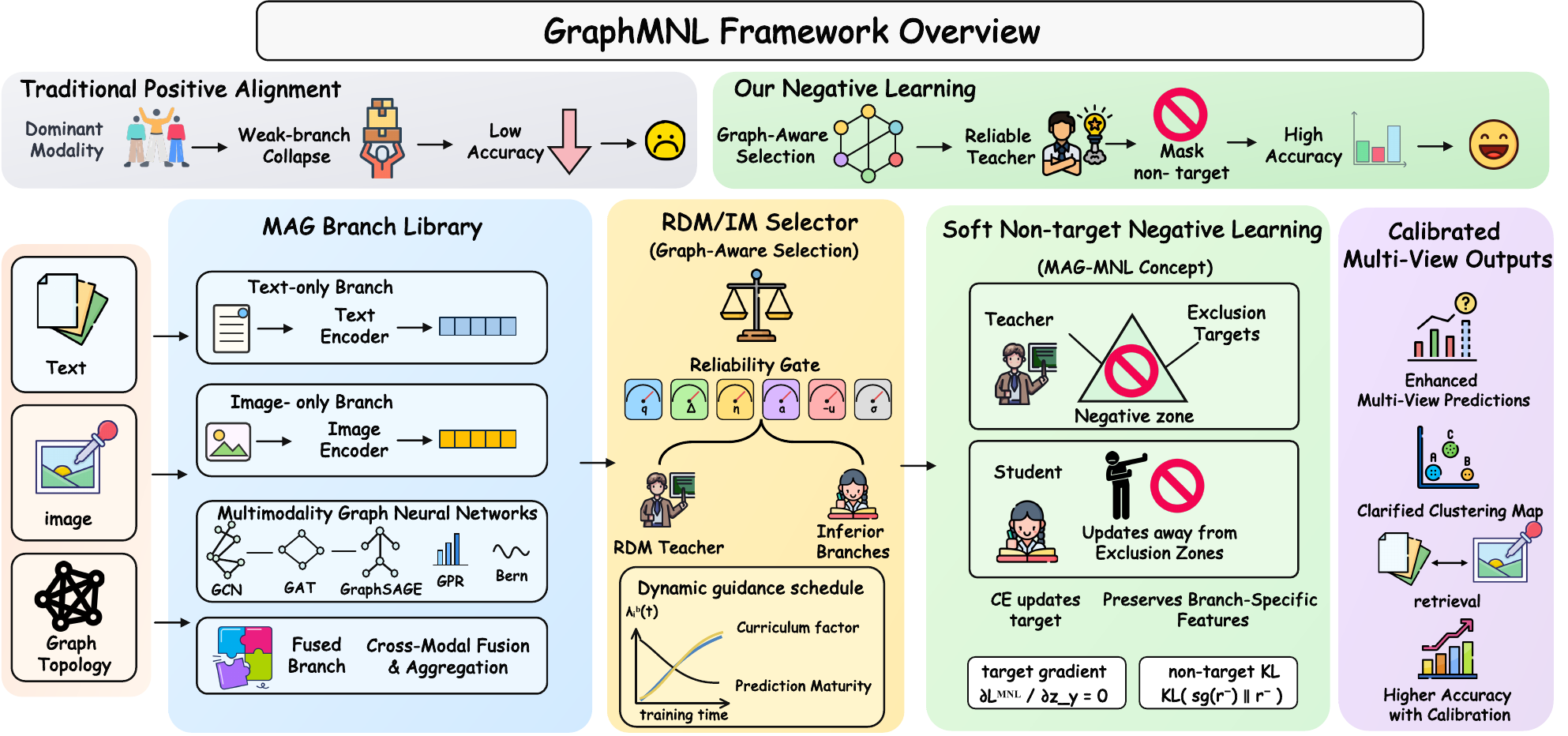}
\caption{Overview of GraphMNL. The framework contrasts positive alignment with negative learning, builds a modality and graph branch library, selects reliable teachers through graph aware RDM and IM cues, applies soft non target negative learning, and produces calibrated multi view predictions.}
\label{fig:framework}
\end{figure*}

\section{Method}

GraphMNL resolves the two limitations identified in the Introduction, unsafe positive imitation (L1) and node wise branch reliability mismatch (L2). As summarized in Fig.~\ref{fig:framework}, GraphMNL replaces full posterior imitation with exclusion based transfer. Reliable branches shape only alternatives outside the target, so target evidence from an inferior branch is not overwritten by a dominant branch that may be locally biased.

The method follows a role first pipeline. \textbf{Branch construction} exposes MAG prediction paths. \textbf{Reliability arbitration} chooses correct teachers supported by graph context. \textbf{Stability gated guidance} converts reliability gaps into transfer weights for each node. \textbf{Exclusion guidance} transfers only softened exclusion rankings over classes outside the target. Theoretical analysis then explains private evidence collapse, target preservation, and bounded graph aware guidance.

\subsection{Branch Architecture}

MAGs contain information sources whose reliability changes across nodes, so GraphMNL starts from a branch library rather than a single fused encoder. An auxiliary branch may use raw modality features, graph topology alone, modality propagated features, cross modal paths, or frequency domain propagation, while the fused path combines branch evidence for the main task prediction. This design is compatible with multimodal GNN backbones such as MMGCN, MGAT, LGMRec, and MHGAT~\cite{wang2019mmgcn,tao2020mgat,guo2024lgmrec,jia2023mhgat}. The enhanced plug and play implementation reported in Q1 uses supervised graph model and frequency domain branches over the same RoBERTa/CLIP MAG inputs and sampled splits.

Given node $i$, each branch $b\in\mathcal{B}$ outputs $\vect{h}_i^b$, $\vect{o}_i^b$, and $\vect{p}_i^b$, while the fused branch outputs $\vect{p}_i^{\mathrm{fuse}}$. Explicit branches make reliability comparison possible for each node and each branch. Later controlled comparisons keep the branch library fixed when testing whether the guidance rule between branches changes behavior.

\subsection{Reliable Branch Selection}

The key challenge in branch selection is that confidence alone is not a trustworthy teacher signal in MAGs. A graph branch can be confident because of homophily, node degree, or local leakage, while a modality branch can be uncertain even when it still contains private evidence. GraphMNL therefore allows a branch to teach only when confidence is supported by class separation, neighborhood consistency, modality availability, and temporal stability. Eq.~\eqref{eq:prediction-evidence} extracts the predicted class, confidence, and top two probability gap from each branch.
\begin{equation}
\begin{aligned}
    \hat{y}_i^b
    &=
    \arg\max_c(\vect{p}_i^b)_c,\\
    q_i^b
    &=
    \max_c(\vect{p}_i^b)_c,\\
    \Delta_i^b
    &=
    (\vect{p}_i^b)_{c_1^b}-(\vect{p}_i^b)_{c_2^b}.
\end{aligned}
\label{eq:prediction-evidence}
\end{equation}
Here $c_1^b$ and $c_2^b$ denote the top two classes under branch $b$. The value $q_i^b$ captures absolute belief, while $\Delta_i^b$ captures separation from the nearest competing class.

GraphMNL then augments prediction evidence with graph aware reliability cues in Eq.~\eqref{eq:graph-aware-cues}:
\begin{equation}
\begin{gathered}
    \eta_i^b
    =
    1-\frac{1}{|\mathcal{N}(i)|}
    \sum_{j\in\mathcal{N}(i)}
    \operatorname{JS}(\vect{p}_i^b,\vect{p}_j^b),
    \\[2pt]
    u_i^b
    =
    \omega_1\frac{1}{\log(2+d_i)}
    +\omega_2\operatorname{Var}_t[\vect{p}_{i,t}^b]
    +\omega_3\operatorname{JS}(\vect{p}_i^b,\vect{p}_i^{\mathrm{fuse}}).
\end{gathered}
\label{eq:graph-aware-cues}
\end{equation}
Here $\operatorname{JS}(\cdot,\cdot)$ denotes Jensen Shannon divergence, $\mathcal{N}(i)$ is the neighborhood of node $i$, and $d_i$ is its degree. The term $\eta_i^b$ favors neighborhood consistent predictions, while $u_i^b$ penalizes low degree, temporal variance, and disagreement with the fused branch. These cues are combined into the final reliability score:
\begin{equation}
    \rho_i^b
    =
    \alpha q_i^b+\beta\Delta_i^b+\gamma\eta_i^b+\delta a_i^b-\xi u_i^b .
\label{eq:reliability}
\end{equation}
The availability term $a_i^b$ records whether the branch input is present and informative. The coefficients control how much each reliability source contributes.
All reliability terms are computed per node and branch, so role assignment can change across the graph instead of imposing a global teacher.

For labeled nodes, GraphMNL only allows correct branches to teach. Let $\mathcal{C}_i=\{b\in\mathcal{B}:\hat{y}_i^b=y_i\}$ be the candidate teacher set. If no branch predicts the label correctly, the node is skipped for exclusion guidance. Otherwise, teacher and student roles are determined by the arbitration rule in Eq.~\eqref{eq:role-assignment}:
\begin{equation}
\begin{gathered}
    b_i^+
    =
    \arg\max_{b\in\mathcal{C}_i}\rho_i^b,
    \\[2pt]
    \mathcal{I}_i
    =
    \{b\in\mathcal{B}\setminus\{b_i^+\}:
    \rho_i^b+\tau<\rho_i^{b_i^+}\}.
\end{gathered}
\label{eq:role-assignment}
\end{equation}
The candidate set $\mathcal{C}_i$ prevents a wrong but confident branch from becoming the teacher, $b_i^+$ selects the most reliable correct branch, and $\mathcal{I}_i$ collects branches whose reliability is sufficiently lower.
This design is stronger than guidance using confidence alone because it can reject a teacher whose confidence is not supported by margin, graph consistency, modality availability, or structural stability. The experiments separate graph aware teacher selection from full posterior positive imitation and from exclusion guidance using confidence alone.
\begin{algorithm}[!b]
\SetAlgoLined
\caption{GraphMNL training pipeline.}
\label{alg:branch-selection}
\KwIn{MAG $\mathcal{G}$, labeled nodes $\mathcal{V}_L$, labels $\{y_i\}$, branch set $\mathcal{B}$, fused predictor, training horizon $T$, hyperparameters $T_n,T_{\rho},T_s,T_{\sigma},\lambda_{\max},\tau$}
\KwOut{Trained GraphMNL model}
\For{$t=1,\ldots,T$}{
    forward each branch $b\in\mathcal{B}$ to obtain $\{\vect{h}_i^b,\vect{o}_i^b,\vect{p}_i^b\}$ and fused prediction $\vect{p}_i^{\mathrm{fuse}}$\;
    initialize $\mathcal{L}$ with fused and branch supervised losses in Eq.~\eqref{eq:node-eg}\;
    \ForEach{$i\in\mathcal{V}_L$}{
        compute prediction evidence $\hat{y}_i^b$, $q_i^b$, and $\Delta_i^b$ by Eq.~\eqref{eq:prediction-evidence}\;
        compute graph-aware cues $\eta_i^b$, $a_i^b$, and $u_i^b$ by Eq.~\eqref{eq:graph-aware-cues}\;
        compute reliability $\rho_i^b$ for each branch by Eq.~\eqref{eq:reliability}\;
        assign dominant branch $b_i^+$ and inferior set $\mathcal{I}_i$ by Eq.~\eqref{eq:role-assignment}\;
        \If{$\mathcal{I}_i\neq\emptyset$}{
            compute $\lambda_i^b(t)$ for each $b\in\mathcal{I}_i$ by Eq.~\eqref{eq:dynamic-lambda}\;
            construct detached soft negative teacher $\bar{\vect{r}}_i^+$ and branch distributions $\vect{r}_i^b$ over $\mathcal{C}_i^-$ by Eq.~\eqref{eq:soft-negative-teacher}\;
            add $\sum_{b\in\mathcal{I}_i}\lambda_i^b(t)\lossEG(i,b)$ to $\mathcal{L}$ as in Eq.~\eqref{eq:node-eg}\;
        }
    }
    update GraphMNL parameters by minimizing $\mathcal{L}$ in Eq.~\eqref{eq:node-eg}\;
}
\Return{Trained GraphMNL model}\;
\end{algorithm}
\subsection{Stability Gated Guidance}

Even a high reliability branch can be noisy early in training, so GraphMNL does not apply exclusion guidance with a fixed warm up rule. Instead, it uses the curriculum factor $s(t)=1-\exp(-t/T_s)$ and computes a dynamic guidance weight for every node and inferior branch:
\begin{equation}
\begin{gathered}
    \lambda_i^b(t)
    =
    \lambda_{\max}s(t)
    \operatorname{sigmoid}
    \left(\frac{\rho_i^{b_i^+}-\rho_i^b-\tau}{T_\rho}\right)
    \exp\left(-\frac{\sigma_i^{b_i^+}}{T_\sigma}\right),
    \\[2pt]
    \sigma_i^b
    =
    \operatorname{EMA}\left(\|\vect{p}_{i,t}^{b}-\vect{p}_{i,t-1}^{b}\|_2^2\right).
\end{gathered}
\label{eq:dynamic-lambda}
\end{equation}
The reliability gap determines whether branch $b$ needs guidance, $s(t)$ prevents abrupt early transfer, and the stability term suppresses teachers whose predictions fluctuate across epochs. Here $T_\rho$ controls how sharply the reliability gap is activated, $T_\sigma$ controls the stability penalty, and $\lambda_{\max}$ caps transfer strength. This gate delays unstable cases for each node and branch while letting reliable nodes benefit earlier. The stress tests evaluate whether this graph aware guidance remains useful under heterogeneous branch reliability.

\subsection{Exclusion Guidance over Classes outside the Target}

The exclusion guidance module solves the transfer problem left by branch selection. Once a reliable teacher and inferior branches are identified, the model must guide alternatives without overwriting target evidence. GraphMNL restricts transfer to the class set outside the target, $\mathcal{C}_i^-$, making guidance between branches a ranking problem among classes the node should not take. Thus, the teacher provides an exclusion ranking, the student keeps its target path, and the gate decides whether the ranking is reliable enough to matter. For each class $c\in\mathcal{C}_i^-$ outside the target, Eq.~\eqref{eq:soft-negative-teacher} normalizes logits on those classes and softens the teacher.
\begin{equation}
\begin{aligned}
    r_{i,c}^{+}
    &=
    \frac{\exp((\vect{o}_i^{b_i^+})_c/T_n)}
    {\sum_{k\in\mathcal{C}_i^-}\exp((\vect{o}_i^{b_i^+})_k/T_n)},\\
    r_{i,c}^{b}
    &=
    \frac{\exp((\vect{o}_i^{b})_c/T_n)}
    {\sum_{k\in\mathcal{C}_i^-}\exp((\vect{o}_i^{b})_k/T_n)},\\
    \bar{r}_{i,c}^{+}
    &=
    (1-\varepsilon_n)r_{i,c}^{+}
    +\frac{\varepsilon_n}{|\mathcal{C}_i^-|}.
\end{aligned}
\label{eq:soft-negative-teacher}
\end{equation}
Here $T_n$ is the temperature over classes outside the target, $\varepsilon_n$ is the smoothing floor, and $\bar{\vect{r}}_i^+$ is detached before supervising inferior branches. These two smoothing controls avoid sharp spikes among excluded classes or collapse onto a single negative class. Since the target class is excluded, the teacher corrects unlikely alternatives while leaving target evidence to supervised learning.

The full node classification objective then combines fused supervision, branch supervision, and exclusion guidance:
\begin{equation}
\begin{gathered}
\begin{aligned}
    \mathcal{L}
    &=
    -\sum_{i\in\mathcal{V}_L}\log(\vect{p}_i^{\mathrm{fuse}})_{y_i}
    -\mu\sum_{b\in\mathcal{B}}\sum_{i\in\mathcal{V}_L}
    \log(\vect{p}_i^b)_{y_i} \\
    &\quad+
    \sum_{i\in\mathcal{V}_L}
    \sum_{b\in\mathcal{I}_i}
    \lambda_i^b(t)
    \lossEG(i,b)
    +\Omega,
\end{aligned}
\\[2pt]
\lossEG(i,b)
=
T_n^2
\operatorname{KL}
\left(
\stopgrad(\bar{\vect{r}}_i^+)
\middle\|
\vect{r}_i^b
\right).
\end{gathered}
\label{eq:node-eg}
\end{equation}
The first two terms learn target decisions, while $\lossEG$ shapes only alternatives outside the target through a detached and dynamically weighted teacher ranking. The term $\Omega$ denotes ordinary regularization. Positive KL and exclusion guidance using confidence alone test this target preserving distinction.

\subsection{GraphMNL$^{+}$ Plug and Play Implementation}

\plusmethod is the enhanced plug and play implementation of GraphMNL used for the main whole graph MAG node classification comparison. It keeps Eq.~\eqref{eq:node-eg} and instantiates the branch library with multimodal graph backbones, including MMGCN\cite{wang2019mmgcn}, MGAT\cite{tao2020mgat}, MHGAT\cite{jia2023mhgat}, LGMRec\cite{guo2024lgmrec}, DMGC\cite{guo2025dmgc}, and DGF\cite{zheng2025dgf} style prediction paths over the same RoBERTa and CLIP MAG inputs used by the experimental protocol. Training combines class balanced supervision with graph aware exclusion guidance, so each multimodal branch learns the target task while reliable branches provide non target exclusion knowledge. 

\section{Theoretical Analysis}

\subsection{Assumptions and Analysis Questions}

This section completes the Method core by summarizing GraphMNL's local update behavior. Lemma~\ref{lem:private-collapse} explains why positive alignment can damage private branch evidence, Proposition~\ref{prop:target-preserving} shows that exclusion guidance preserves target evidence, and Proposition~\ref{prop:stability-control} shows that stability gating suppresses unreliable transfer. These claims motivate the controls. Positive KL tests full positive imitation, exclusion guidance using confidence alone tests teacher selection without graph aware reliability, and matched removal tests whether exclusion guidance contributes beyond the enhanced implementation.

\begin{assumption}
\label{assump:local-update}
For each labeled node $i$, branch logits are differentiable, the dominant branch $b_i^+$ is detached as teacher, exclusion guidance applies only to $b\in\mathcal{I}_i$ over $\mathcal{C}_i^-$, the softened teacher in Eq.~\eqref{eq:soft-negative-teacher} is bounded away from zero, and $0\le\lambda_i^b(t)\le\lambda_{\max}$.
\end{assumption}

\subsection{Private Evidence Collapse under Positive Alignment}

Positive alignment treats disagreement as an error to remove, which can be unsafe when an inferior branch contains modality private evidence absent from the teacher.

\begin{lemma}
\label{lem:private-collapse}
For a positive feature alignment loss
\begin{equation}
    \mathcal{L}_{\mathrm{pa}} =
    \frac{1}{2}
    \|\mat{P}_{b}\vect{h}^{b}
    -\stopgrad(\mat{P}_{t}\vect{h}^{t})\|_2^2 .
\label{eq:positive-alignment}
\end{equation}
where $\mat{P}_b$ and $\mat{P}_t$ are projection heads for branch $b$ and teacher $t$. Let $\vect{u}$ be a branch private direction with $\|\vect{u}\|_2=1$. If the teacher target has no component along $\mat{P}_b\vect{u}$, then gradient descent on Eq.~\eqref{eq:positive-alignment} decreases the component of $\vect{h}^b$ along $\vect{u}$ whenever $\mat{P}_b^{\top}\mat{P}_b\vect{u}$ is positively aligned with $\vect{u}$.
\end{lemma}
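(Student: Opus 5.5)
The plan is to compute the gradient of $\mathcal{L}_{\mathrm{pa}}$ with respect to $\vect{h}^b$ explicitly and track how one gradient step changes the scalar component $\langle \vect{u},\vect{h}^b\rangle$. Because the teacher term is wrapped in $\stopgrad$, it acts as a constant target $\vect{z}^t:=\mat{P}_t\vect{h}^t$. Differentiating the squared norm then gives
\begin{equation*}
\nabla_{\vect{h}^b}\mathcal{L}_{\mathrm{pa}}=\mat{P}_b^{\top}\bigl(\mat{P}_b\vect{h}^b-\vect{z}^t\bigr).
\end{equation*}
A gradient step with step size $\eta>0$ produces $\vect{h}^b_{+}=\vect{h}^b-\eta\,\mat{P}_b^{\top}(\mat{P}_b\vect{h}^b-\vect{z}^t)$. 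Taking the inner product with $\vect{u}$ and using the transpose,
\begin{equation*}
\langle \vect{u},\vect{h}^b_{+}\rangle-\langle \vect{u},\vect{h}^b\rangle
=-\eta\,\langle \mat{P}_b\vect{u},\mat{P}_b\vect{h}^b\rangle+\eta\,\langle \mat{P}_b\vect{u},\vect{z}^t\rangle .
\end{equation*}
The hypothesis that the teacher target has no component along $\mat{P}_b\vect{u}$ means $\langle \mat{P}_b\vect{u},\vect{z}^t\rangle=0$, so the second term vanishes. What remains is $-\eta\,\langle \mat{P}_b^{\top}\mat{P}_b\vect{u},\vect{h}^b\rangle$.

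The main obstacle is showing that this remaining term has the correct sign, i.e.\ that it opposes the $\vect{u}$-component. That component is only guaranteed to shrink if $\langle \mat{P}_b^{\top}\mat{P}_b\vect{u},\vect{h}^b\rangle$ has the same sign as $\langle\vect{u},\vect{h}^b\rangle$. To handle this, decompose $\vect{h}^b=\kappa\vect{u}+\vect{h}_\perp$ with $\kappa=\langle\vect{u},\vect{h}^b\rangle$. This yields
\begin{equation*}
\langle \mat{P}_b^{\top}\mat{P}_b\vect{u},\vect{h}^b\rangle=\kappa\,\|\mat{P}_b\vect{u}\|_2^2+\langle \mat{P}_b\vect{u},\mat{P}_b\vect{h}_\perp\rangle .
\end{equation*}
I will interpret ``$\mat{P}_b^{\top}\mat{P}_b\vect{u}$ is positively aligned with $\vect{u}$'' as saying that $\vect{u}$ is, at least locally, an invariant direction of the positive semidefinite matrix $\mat{P}_b^{\top}\mat{P}_b$. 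Concretely, $\mat{P}_b^{\top}\mat{P}_b\vect{u}=c\,\vect{u}$ with $c=\|\mat{P}_b\vect{u}\|_2^2>0$. Under this reading the cross term $\langle \mat{P}_b\vect{u},\mat{P}_b\vect{h}_\perp\rangle=\langle \mat{P}_b^{\top}\mat{P}_b\vect{u},\vect{h}_\perp\rangle$ vanishes. The update then reduces to $\kappa_{+}=(1-\eta c)\kappa$.

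From $\kappa_{+}=(1-\eta c)\kappa$, the magnitude $|\kappa|$ strictly decreases for any $0<\eta<2/c$, and it decays monotonically toward zero for $\eta<1/c$. This is exactly the collapse of private evidence that the lemma asserts.

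If one insists on the weaker reading, where $\mat{P}_b^{\top}\mat{P}_b\vect{u}$ merely has positive inner product with $\vect{u}$, I would state the conclusion in gradient-flow form instead. The flow is $\frac{d}{dt}\kappa=-\langle \mat{P}_b^{\top}\mat{P}_b\vect{u},\vect{h}^b\rangle$. I would then note that this quantity has the sign of $\kappa$ whenever the $\vect{u}$-component dominates the cross term, for example when $\vect{h}_\perp$ is small relative to $\kappa$ or is orthogonal to the range of $\mat{P}_b^{\top}\mat{P}_b\vect{u}$. Finally, I would remark that there is no restoring force in either reading, since the teacher contributes nothing along $\mat{P}_b\vect{u}$. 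The alignment loss therefore only removes private evidence and never replenishes it.
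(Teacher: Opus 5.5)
Your argument has the same skeleton as the paper's sketch: the same gradient $\mat{P}_b^{\top}(\mat{P}_b\vect{h}^b-\mat{P}_t\vect{h}^t)$ and the same one-step inner product with $\vect{u}$. It diverges at the decisive step, and your version is the more careful one.

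The paper calls the teacher term ``dominated''. Under the hypothesis it is in fact exactly zero, as you note. The paper then reads off shrinkage from $\langle\mat{P}_b^{\top}\mat{P}_b\vect{h}^b,\vect{u}\rangle$ without checking that this quantity has the sign of $\kappa=\langle\vect{u},\vect{h}^b\rangle$, and it never invokes the alignment hypothesis. Your split $\vect{h}^b=\kappa\vect{u}+\vect{h}_\perp$ exposes the cross term $\langle\mat{P}_b\vect{u},\mat{P}_b\vect{h}_\perp\rangle$ that the sketch ignores.

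Your suspicion of the weak reading is justified. First, $\langle\mat{P}_b^{\top}\mat{P}_b\vect{u},\vect{u}\rangle=\|\mat{P}_b\vect{u}\|_2^2>0$ holds automatically whenever $\mat{P}_b\vect{u}\neq 0$, so it cannot be what carries the argument. Second, the conclusion can fail under it. Take $\mat{P}_b=(1\;\;1)$, $\vect{u}=(1,0)^{\top}$, teacher target $0$, and $\vect{h}^b=(1,-3)^{\top}$; one step gives $\kappa_{+}=1+2\eta$, so the private component grows.

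The eigenvector reading $\mat{P}_b^{\top}\mat{P}_b\vect{u}=c\,\vect{u}$, or an explicit dominance condition on $\vect{h}_\perp$, is what makes the lemma true. Under it you obtain the exact contraction $\kappa_{+}=(1-\eta c)\kappa$ and an explicit step-size range, neither of which the paper's heuristic sketch provides. Under that reading the decomposition is not even needed, since symmetry gives $\langle\mat{P}_b^{\top}\mat{P}_b\vect{h}^b,\vect{u}\rangle=\langle\vect{h}^b,\mat{P}_b^{\top}\mat{P}_b\vect{u}\rangle=c\kappa$ directly. The decomposition earns its keep in diagnosing the weak case.

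Two small points:
\begin{itemize}
\item Moving to gradient flow in the weak case adds nothing, because the same sign condition governs sufficiently small discrete steps.
\item ``Orthogonal to the range of $\mat{P}_b^{\top}\mat{P}_b\vect{u}$'' should simply say orthogonal to that vector, i.e.\ the cross term vanishes.
\end{itemize}
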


\noindent\textit{Proof sketch.}
The gradient is $\nabla_{\vect{h}^{b}}\mathcal{L}_{\mathrm{pa}}=\mat{P}_{b}^{\top}(\mat{P}_{b}\vect{h}^{b}-\mat{P}_{t}\vect{h}^{t})$. After one gradient step, the private component changes by
\begin{equation}
    \langle \vect{h}_{+}^{b},\vect{u}\rangle
    =
    \langle \vect{h}^{b},\vect{u}\rangle
    -\eta
    \langle\nabla_{\vect{h}^{b}}\mathcal{L}_{\mathrm{pa}},
    \vect{u}\rangle .
\label{eq:private-component-update}
\end{equation}
If the teacher lacks the private component, the second inner product in Eq.~\eqref{eq:private-component-update} is dominated by $\langle\mat{P}_{b}^{\top}\mat{P}_{b}\vect{h}^{b},\vect{u}\rangle$, so positive alignment shrinks branch private evidence.

\subsection{Target Preserving Property of Exclusion Guidance}

Exclusion guidance protects target evidence because it applies transfer only on classes outside the target.

\begin{proposition}
\label{prop:target-preserving}
For node $i$ and inferior branch $b$, define
\begin{equation}
    \ell_{i,b}
    =
    T_n^2\lambda_i^b(t)
    \operatorname{KL}
    \left(
    \stopgrad(\bar{\vect{r}}_i^+)
    \middle\|
    \vect{r}_i^b
    \right),
\label{eq:local-eg-loss}
\end{equation}
where $\vect{r}_i^b$ is computed only on $\mathcal{C}_i^-$. Then Eq.~\eqref{eq:local-eg-loss} gives
\begin{equation}
    \partial_{(\vect{o}_i^b)_{y_i}}\ell_{i,b}=0,\quad
    \|\nabla_{\vect{o}_{i,\mathcal{C}_i^-}^{b}}\ell_{i,b}\|_2
    \le T_n\lambda_i^b(t)\sqrt{2}.
\label{eq:eg-gradient-bound}
\end{equation}
\end{proposition}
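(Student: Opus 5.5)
The plan is a direct computation of the gradient of Eq.~\eqref{eq:local-eg-loss} with respect to the logits $\vect{o}_i^b$, in two parts.

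\textbf{Target coordinate.} First I settle the target coordinate. By Eq.~\eqref{eq:soft-negative-teacher}, $\vect{r}_i^b$ depends on $\vect{o}_i^b$ only through the coordinates $(\vect{o}_i^b)_c$ with $c\in\mathcal{C}_i^-$. The target $y_i$ lies outside $\mathcal{C}_i^-$. The teacher $\bar{\vect{r}}_i^+$ is detached, and by Assumption~\ref{assump:local-update} so is the gate $\lambda_i^b(t)$: it is treated as a constant weight, which is how it enters Eq.~\eqref{eq:node-eg}. Hence $\ell_{i,b}$ is constant in $(\vect{o}_i^b)_{y_i}$, and the partial derivative vanishes identically. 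I will state explicitly that the reliability scores inside $\lambda_i^b(t)$ are treated as non-differentiated. Otherwise a gradient path through $q_i^b$ and $\Delta_i^b$ could reach the target logit, and I expect this to be the main point needing care.

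\textbf{Non-target coordinates.} Next I compute the gradient on $\mathcal{C}_i^-$. Write $z_c=(\vect{o}_i^b)_c/T_n$, so $\vect{r}_i^b=\softmax(\vect{z})$ over $\mathcal{C}_i^-$. Then
\[
\operatorname{KL}(\bar{\vect{r}}_i^+\|\vect{r}_i^b)=\sum_c \bar r_{i,c}^+\log \bar r_{i,c}^+-\sum_c \bar r_{i,c}^+ z_c+\log\sum_k e^{z_k}.
\]
Its gradient in $\vect{z}$ is the standard softmax cross-entropy gradient $\vect{r}_i^b-\bar{\vect{r}}_i^+$, using $\sum_c\bar r_{i,c}^+=1$. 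The chain rule adds a factor $1/T_n$, which gives
\[
\nabla_{\vect{o}_{i,\mathcal{C}_i^-}^b}\ell_{i,b}=T_n\lambda_i^b(t)\,(\vect{r}_i^b-\bar{\vect{r}}_i^+).
\]

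\textbf{Norm bound.} Finally, both $\vect{r}_i^b$ and $\bar{\vect{r}}_i^+$ are probability vectors on $\mathcal{C}_i^-$. The smoothed teacher is a convex combination of a distribution and the uniform distribution, so it is also a distribution. By the triangle inequality and $\|\vect{p}\|_2\le\|\vect{p}\|_1=1$, we get $\|\vect{r}-\bar{\vect{r}}\|_2\le 2$. A sharper route is
\[
\|\vect{r}-\bar{\vect{r}}\|_2^2=\|\vect{r}\|_2^2+\|\bar{\vect{r}}\|_2^2-2\langle\vect{r},\bar{\vect{r}}\rangle\le 1+1-0=2,
\]
which holds because the entries are nonnegative and each squared norm is at most $1$. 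This yields exactly the constant $\sqrt{2}$ in Eq.~\eqref{eq:eg-gradient-bound}. Together with $0\le\lambda_i^b(t)\le\lambda_{\max}$, this gives the claimed bound.
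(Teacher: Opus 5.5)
Your proposal is correct and follows the paper's proof sketch step for step. The target logit drops out of the restricted softmax, the temperature-scaled KL gradient on $\mathcal{C}_i^-$ is $T_n\lambda_i^b(t)(\vect{r}_i^b-\bar{\vect{r}}_i^+)$, and both vectors lie on the simplex. Your expansion $\|\vect{r}\|_2^2+\|\bar{\vect{r}}\|_2^2-2\langle\vect{r},\bar{\vect{r}}\rangle\le 2$ makes the paper's one-line $\sqrt{2}$ claim explicit, and the paper's sketch also silently treats $\lambda_i^b(t)$ as non-differentiated. That is an added stipulation, not something Assumption~\ref{assump:local-update} gives you, since the assumption only detaches the teacher and bounds $\lambda_i^b(t)$.
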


\noindent\textit{Proof sketch.}
The target logit is excluded from the restricted softmax, so its derivative is zero. For logits outside the target, temperature scaled KL gives
\begin{equation}
    \nabla_{\vect{o}_{i,\mathcal{C}_i^-}^{b}}\ell_{i,b}
    =
    T_n\lambda_i^b(t)
    (\vect{r}_i^b-\bar{\vect{r}}_i^+).
\label{eq:eg-kl-gradient}
\end{equation}
Both distributions lie on the simplex, so the right hand side of Eq.~\eqref{eq:eg-kl-gradient} has Euclidean norm at most $T_n\lambda_i^b(t)\sqrt{2}$, yielding Eq.~\eqref{eq:eg-gradient-bound}. Supervised learning therefore remains responsible for the correct class, while exclusion guidance reshapes unlikely alternatives.

\subsection{Stability Control under Graph Aware Guidance}

Because confidence can reflect degree effects, structural bias, or unstable neighborhoods, the stability gate turns reliability into a bounded transfer rule.

\begin{proposition}
\label{prop:stability-control}
Let $g_i^b=\rho_i^{b_i^+}-\rho_i^b-\tau$ be the reliability gap used by Eq.~\eqref{eq:dynamic-lambda}. For every inferior branch $b$, the guidance weight satisfies
\begin{equation}
\begin{aligned}
    0\le \lambda_i^b(t)
    &\le
    \lambda_{\max}s(t)
    \exp\left(-\frac{\sigma_i^{b_i^+}}{T_\sigma}\right)
    \le \lambda_{\max},\\
    \partial_{g_i^b}\lambda_i^b(t)
    &\ge 0,\quad
    \partial_{\sigma_i^{b_i^+}}\lambda_i^b(t)
    = -\lambda_i^b(t)/T_\sigma \le 0 .
\end{aligned}
\label{eq:stability-bound}
\end{equation}
\end{proposition}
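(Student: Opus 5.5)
The claim follows by viewing $\lambda_i^b(t)$ in Eq.~\eqref{eq:dynamic-lambda} as a product of four factors and handling each one separately. We write
\begin{equation*}
\lambda_i^b(t)=\lambda_{\max}\,s(t)\,\operatorname{sigmoid}(g_i^b/T_\rho)\,\exp\!\left(-\sigma_i^{b_i^+}/T_\sigma\right),
\end{equation*}
with $g_i^b=\rho_i^{b_i^+}-\rho_i^b-\tau$. We use the standing conventions $\lambda_{\max}>0$, $T_s,T_\rho,T_\sigma>0$, and $t\ge 0$. Under these conventions $s(t)=1-\exp(-t/T_s)\in[0,1)$. The stability statistic $\sigma_i^{b_i^+}$ is nonnegative, because it is an exponential moving average of squared norms $\|\vect{p}_{i,t}^{b}-\vect{p}_{i,t-1}^{b}\|_2^2\ge 0$ with convex weights. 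Consequently $\exp(-\sigma_i^{b_i^+}/T_\sigma)\in(0,1]$.

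\textbf{Bounds.} Every factor is nonnegative, so $\lambda_i^b(t)\ge 0$. Since the sigmoid takes values in $(0,1)$, dropping it gives the middle inequality
\begin{equation*}
\lambda_i^b(t)\le\lambda_{\max}s(t)\exp\!\left(-\sigma_i^{b_i^+}/T_\sigma\right).
\end{equation*}
Applying $s(t)\le 1$ and $\exp(-\sigma_i^{b_i^+}/T_\sigma)\le 1$ then gives the final bound $\lambda_{\max}$. Note that this bound is exactly the condition assumed in Assumption~\ref{assump:local-update}, so the proposition confirms that assumption holds.

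\textbf{Monotonicity.} We treat $g_i^b$ and $\sigma_i^{b_i^+}$ as the independent variables of $\lambda_i^b(t)$; the other factors do not depend on them. Using $\operatorname{sigmoid}'(x)=\operatorname{sigmoid}(x)(1-\operatorname{sigmoid}(x))\ge 0$ and the chain rule,
\begin{equation*}
\partial_{g_i^b}\lambda_i^b(t)=\frac{\lambda_i^b(t)}{T_\rho}\left(1-\operatorname{sigmoid}(g_i^b/T_\rho)\right)\ge 0 .
\end{equation*}
The exponential factor reproduces itself under differentiation, so $\partial_{\sigma_i^{b_i^+}}\lambda_i^b(t)=-\lambda_i^b(t)/T_\sigma$. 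This quantity is $\le 0$ by the nonnegativity shown above.

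\textbf{Main obstacle.} The only delicate point is interpretive rather than computational. The derivatives are partial derivatives with respect to $g_i^b$ and $\sigma_i^{b_i^+}$ as formal arguments of the gate, holding the other arguments fixed. They are not total derivatives through the model parameters: $\rho_i^{b_i^+}$ and $\sigma$ both depend on the network outputs. This reading is consistent with the teacher being detached in Eq.~\eqref{eq:soft-negative-teacher}, so the gate acts as a fixed weight inside each update. We also need $\sigma\ge 0$, which holds provided the EMA is initialized at a nonnegative value, for example zero.
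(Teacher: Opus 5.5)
Your proof is correct and follows the same route as the paper's sketch. Both factor $\lambda_i^b(t)$ into $\lambda_{\max}$, $s(t)$, the sigmoid, and the exponential stability term, bound each factor in $[0,1]$, and read off monotonicity from the sigmoid and the exponential; your chain-rule formulas and side conditions ($T_s,T_\rho,T_\sigma>0$, $t\ge 0$, and $\sigma_i^{b_i^+}\ge 0$ from a nonnegatively initialized EMA) just make explicit what the paper's ``nonnegative and at most one'' assumes.
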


\noindent\textit{Proof sketch and takeaway.}
The sigmoid, curriculum factor, and exponential stability term in Eq.~\eqref{eq:dynamic-lambda} are nonnegative and at most one. The sigmoid is monotone in the reliability gap, and the exponential term decreases with teacher variance. Therefore Eq.~\eqref{eq:stability-bound} gives a bounded, monotone transfer rule. Branch supervision learns targets, exclusion guidance reshapes only alternatives outside the target, and graph aware gating prevents unstable teachers from dominating inferior branches.

\noindent\textbf{Theoretical takeaways.}
The analysis shows that GraphMNL is not merely a branch ensemble. Positive alignment can remove branch private components, exclusion guidance gives zero gradient to the target logit and bounded gradients to logits outside the target, and graph aware gating increases guidance only when the reliability gap is large and the teacher is stable. Thus branch supervision preserves target evidence, exclusion guidance transfers exclusion knowledge, and gating limits destructive imitation. The theory supports the mechanism tests in Q2 and the matched ablation in Q3, but it does not by itself claim universal robustness or broad benchmark dominance.

\begin{table*}[!t]
\centering
\small
\setlength{\tabcolsep}{5pt}
\renewcommand{\arraystretch}{1.05}
\caption{MAG datasets' statistics with RoBERTa/CLIP features.}
\label{tab:dataset-stats}
\resizebox{0.9\textwidth}{!}{%
\begin{tabular}{@{}llrrrccccc@{}}
\toprule
Dataset & Domain & Nodes & Edges & Classes & Modalities & Split & Avg. Words & Image Res. & Feat. Dim.\\
\midrule
Movies & Online commerce & 16,672 & 218,390 & 20 & Text, Visual & 60/20/20 & 81.85 & 388$\times$476 & 768/768\\
Grocery & Online commerce & 17,074 & 171,340 & 20 & Text, Visual & 60/20/20 & 67.36 & 402$\times$457 & 768/768\\
Toys & Online commerce & 20,695 & 126,886 & 18 & Text, Visual & 60/20/20 & 74.50 & 467$\times$442 & 768/768\\
Reddit S & Social media & 15,894 & 566,160 & 20 & Text, Visual & 60/20/20 & 10.23 & 2515$\times$2399 & 768/768\\
Reddit M & Social media & 99,638 & 1,167,188 & 50 & Text, Visual & 60/20/20 & 10.22 & 2605$\times$2662 & 768/768\\
\bottomrule
\end{tabular}
}
\end{table*}

\section{Experiments}

\noindent
This section evaluates GraphMNL through effectiveness, stress behavior, ablation evidence, and parameter sensitivity. The evaluation answers four questions. \textbf{Q1 Effectiveness:} Whether GraphMNL improves the effectiveness on node classification tasks compared to competitive baselines? \textbf{Q2 Stress and Mechanism:} Whether graph aware exclusion guidance remains useful when branch semantic is disrupted with the stress test settings. \textbf{Q3 Ablation:} Do designed modules contribute to success of GraphMNL? \textbf{Q4 Hyperparameter Sensitivity:} Whether the exclusion and gating temperatures require precise tuning for optimal performance?

\subsection{Experimental Setup}

\noindent\textbf{Datasets.}
The statistics of five MAG datasets are displayed  in Table~\ref{tab:dataset-stats}, which involve three product graphs and two social graphs differing in density, class count, and modality profile~\cite{shchur2018amazon_datasets,hu2020ogb,zeng2019graphsaint}.
Textual and image inputs use RoBERTa and CLIP features~\cite{liu2019roberta,radford2021clip}.

\noindent\textbf{Baselines.}
Baselines include two categories. (1) \textit{Existing multimodal graph learning baselines}, including DMGC~\cite{guo2025dmgc}, DGF~\cite{zheng2025dgf}, MMGCN~\cite{wang2019mmgcn}, MGAT~\cite{tao2020mgat}, MHGAT~\cite{jia2023mhgat}, LGMRec~\cite{guo2024lgmrec}, and UniGraph2~\cite{he2025unigraph2}. (2) \textit{Variants of GraphMNL}, which isolate the transfer mechanisms in the Method section. 
\textbf{Fusion} keeps the branch and fused prediction paths but removes cross-branch guidance, testing whether branch fusion alone explains the gain. 
\textbf{Positive KL} replaces the target-preserving exclusion term in Eq.~\eqref{eq:node-eg} with full-posterior teacher imitation, testing the risk of positive imitation. 
\textbf{EG Conf} keeps the non-target exclusion teacher in Eq.~\eqref{eq:soft-negative-teacher}, but selects teachers by confidence instead of graph-aware reliability and role assignment. 
\textbf{\plusmethod} uses the enhanced plug-and-play branch library while preserving the objective in Eq.~\eqref{eq:node-eg}. 

\noindent\textbf{Evaluation Protocol.}
All direct comparisons adopts the same experimental settings for fairness comparison, and the results are averaged from 10 separate runs.
Accuracy measures overall correctness. Macro F1 diagnosis if the uneven branch reliability amplify class imbalance or rare class errors.

\noindent\textbf{Stress Protocols.}
Stress evaluation is designed as a mechanism test rather than an exhaustive robustness benchmark. \textit{Synthetic stress} perturbs one source at a time, including text features, image features, modality availability, graph edges, and labels, so that shifts in branch reliability can be observed under controlled conditions. 
\textit{Real MAG stress} then averages text noise, image noise, missing modalities, and edge dropout, reflecting how GraphMNL behaves when branch evidence becomes uneven as in more realistic settings.

\noindent \textbf{Data and Code Availability.} To ensure reproducibility, the complete source code of GraphMNL is avaliable at \url{https://anonymous.4open.science/r/GraphMNL-7983}.

\subsection{Main Results}

To answer \textbf{Q1 Effectiveness}, Table~\ref{tab:magb-node-baselines} tests whether the enhanced implementation turns graph aware exclusion guidance into stronger sampled MAG node classification.
\begin{table*}[t]
\centering
\caption{Performance comparison on node classification task. Best result is in \textbf{Bold} and the second best result is \underline{underlined}.}
\label{tab:magb-node-baselines}
\scriptsize
\vspace{-0.3cm}
\setlength{\tabcolsep}{2 pt}
\renewcommand{\arraystretch}{1.2}
\resizebox{\textwidth}{!}{%
\begin{tabular}{lcccccccccccc}
\toprule
Method & \multicolumn{2}{c}{Movies} & \multicolumn{2}{c}{Grocery} & \multicolumn{2}{c}{Toys} & \multicolumn{2}{c}{Reddit S} & \multicolumn{2}{c}{Reddit M} & \multicolumn{2}{c}{Avg.} \\
\cmidrule(lr){2-3}\cmidrule(lr){4-5}\cmidrule(lr){6-7}\cmidrule(lr){8-9}\cmidrule(lr){10-11}\cmidrule(lr){12-13}
& Acc. & F1 & Acc. & F1 & Acc. & F1 & Acc. & F1 & Acc. & F1 & Acc. & F1 \\
\midrule
DMGC & $43.21_{\scriptstyle \pm 0.56}$ & $27.42_{\scriptstyle \pm 0.28}$ & $69.64_{\scriptstyle \pm 0.40}$ & $56.82_{\scriptstyle \pm 0.89}$ & $68.71_{\scriptstyle \pm 0.44}$ & $61.22_{\scriptstyle \pm 0.07}$ & $91.41_{\scriptstyle \pm 0.33}$ & $82.61_{\scriptstyle \pm 0.21}$ & $72.22_{\scriptstyle \pm 0.09}$ & $65.32_{\scriptstyle \pm 0.72}$ & $69.02_{\scriptstyle \pm 0.03}$ & $58.66_{\scriptstyle \pm 0.07}$ \\
DGF & $43.93_{\scriptstyle \pm 0.74}$ & $28.31_{\scriptstyle \pm 0.99}$ & $70.52_{\scriptstyle \pm 0.38}$ & $57.41_{\scriptstyle \pm 0.30}$ & $69.62_{\scriptstyle \pm 0.42}$ & $62.01_{\scriptstyle \pm 0.01}$ & $91.93_{\scriptstyle \pm 0.48}$ & $83.22_{\scriptstyle \pm 0.30}$ & $73.71_{\scriptstyle \pm 0.02}$ & $66.11_{\scriptstyle \pm 0.61}$ & $69.92_{\scriptstyle \pm 0.26}$ & $59.41_{\scriptstyle \pm 0.20}$ \\
MMGCN & $43.42_{\scriptstyle \pm 0.80}$ & $28.07_{\scriptstyle \pm 0.02}$ & $70.12_{\scriptstyle \pm 0.33}$ & $58.24_{\scriptstyle \pm 0.58}$ & $69.22_{\scriptstyle \pm 0.85}$ & $62.42_{\scriptstyle \pm 0.37}$ & $91.35_{\scriptstyle \pm 0.73}$ & $82.71_{\scriptstyle \pm 0.54}$ & $72.83_{\scriptstyle \pm 0.95}$ & $65.53_{\scriptstyle \pm 0.68}$ & $69.36_{\scriptstyle \pm 0.27}$ & $59.36_{\scriptstyle \pm 0.52}$ \\
MGAT & $42.92_{\scriptstyle \pm 0.49}$ & $27.18_{\scriptstyle \pm 0.78}$ & $69.94_{\scriptstyle \pm 0.53}$ & $58.02_{\scriptstyle \pm 0.52}$ & $68.98_{\scriptstyle \pm 0.43}$ & $61.84_{\scriptstyle \pm 0.16}$ & $91.07_{\scriptstyle \pm 0.59}$ & $82.43_{\scriptstyle \pm 0.71}$ & $72.55_{\scriptstyle \pm 0.46}$ & $65.21_{\scriptstyle \pm 0.75}$ & $69.04_{\scriptstyle \pm 0.64}$ & $58.93_{\scriptstyle \pm 0.11}$ \\
MHGAT & $43.35_{\scriptstyle \pm 0.81}$ & $27.84_{\scriptstyle \pm 0.25}$ & $70.06_{\scriptstyle \pm 0.60}$ & $57.71_{\scriptstyle \pm 0.85}$ & $69.03_{\scriptstyle \pm 0.68}$ & $62.16_{\scriptstyle \pm 0.62}$ & $91.22_{\scriptstyle \pm 0.45}$ & $82.82_{\scriptstyle \pm 0.26}$ & $72.97_{\scriptstyle \pm 0.27}$ & $65.66_{\scriptstyle \pm 0.28}$ & $69.28_{\scriptstyle \pm 0.30}$ & $59.20_{\scriptstyle \pm 0.08}$ \\
LGMRec & $43.66_{\scriptstyle \pm 0.02}$ & $29.22_{\scriptstyle \pm 0.19}$ & $71.18_{\scriptstyle \pm 0.11}$ & $59.19_{\scriptstyle \pm 0.39}$ & $69.82_{\scriptstyle \pm 0.69}$ & $63.02_{\scriptstyle \pm 0.35}$ & $91.71_{\scriptstyle \pm 0.18}$ & $83.41_{\scriptstyle \pm 0.98}$ & $73.92_{\scriptstyle \pm 0.62}$ & $66.87_{\scriptstyle \pm 0.69}$ & $70.02_{\scriptstyle \pm 0.15}$ & $60.32_{\scriptstyle \pm 0.84}$ \\
UniGraph2 & $40.83_{\scriptstyle \pm 0.75}$ & $19.52_{\scriptstyle \pm 0.24}$ & $65.82_{\scriptstyle \pm 0.26}$ & $49.32_{\scriptstyle \pm 0.95}$ & $64.21_{\scriptstyle \pm 0.56}$ & $53.81_{\scriptstyle \pm 0.55}$ & $88.73_{\scriptstyle \pm 0.43}$ & $78.13_{\scriptstyle \pm 0.65}$ & $69.93_{\scriptstyle \pm 0.14}$ & $61.01_{\scriptstyle \pm 0.06}$ & $65.88_{\scriptstyle \pm 0.21}$ & $52.34_{\scriptstyle \pm 0.29}$ \\
\midrule
Fusion & $42.53_{\scriptstyle \pm 0.01}$ & $12.73_{\scriptstyle \pm 0.82}$ & $65.41_{\scriptstyle \pm 0.52}$ & $46.73_{\scriptstyle \pm 0.06}$ & $63.66_{\scriptstyle \pm 0.77}$ & $47.34_{\scriptstyle \pm 0.09}$ & $85.21_{\scriptstyle \pm 0.10}$ & $69.14_{\scriptstyle \pm 0.79}$ & $61.53_{\scriptstyle \pm 0.20}$ & $51.87_{\scriptstyle \pm 0.22}$ & $63.65_{\scriptstyle \pm 0.57}$ & $45.55_{\scriptstyle \pm 0.36}$ \\
Positive KL & $42.53_{\scriptstyle \pm 0.35}$ & $12.85_{\scriptstyle \pm 0.33}$ & $64.67_{\scriptstyle \pm 0.80}$ & $44.76_{\scriptstyle \pm 0.63}$ & $62.07_{\scriptstyle \pm 0.97}$ & $45.94_{\scriptstyle \pm 0.40}$ & $85.33_{\scriptstyle \pm 0.42}$ & $67.15_{\scriptstyle \pm 0.36}$ & $61.47_{\scriptstyle \pm 0.38}$ & $52.07_{\scriptstyle \pm 0.90}$ & $63.21_{\scriptstyle \pm 0.63}$ & $44.55_{\scriptstyle \pm 0.44}$ \\
EG Conf & $42.67_{\scriptstyle \pm 0.63}$ & $14.03_{\scriptstyle \pm 0.64}$ & $65.07_{\scriptstyle \pm 0.10}$ & $44.73_{\scriptstyle \pm 0.77}$ & $63.47_{\scriptstyle \pm 0.61}$ & $46.23_{\scriptstyle \pm 0.37}$ & $85.67_{\scriptstyle \pm 0.83}$ & $68.32_{\scriptstyle \pm 0.83}$ & $61.27_{\scriptstyle \pm 0.10}$ & $52.73_{\scriptstyle \pm 0.29}$ & $63.63_{\scriptstyle \pm 0.54}$ & $45.20_{\scriptstyle \pm 0.71}$ \\
GraphMNL & $\underline{44.05}_{\scriptstyle \pm 0.88}$ & $\underline{31.60}_{\scriptstyle \pm 0.12}$ & $\underline{71.55}_{\scriptstyle \pm 0.91}$ & $\underline{59.35}_{\scriptstyle \pm 0.62}$ & $\underline{70.05}_{\scriptstyle \pm 0.47}$ & $\underline{64.35}_{\scriptstyle \pm 0.93}$ & $\underline{91.95}_{\scriptstyle \pm 0.74}$ & $\underline{83.55}_{\scriptstyle \pm 0.39}$ & $\underline{74.35}_{\scriptstyle \pm 0.74}$ & $\underline{68.00}_{\scriptstyle \pm 0.47}$ & $\underline{70.39}_{\scriptstyle \pm 0.31}$ & $\underline{61.37}_{\scriptstyle \pm 0.05}$ \\
\midrule
\plusmethod & $\mathbf{44.27}_{\scriptstyle \pm 0.08}$ & $\mathbf{34.23}_{\scriptstyle \pm 0.85}$ & $\mathbf{72.47}_{\scriptstyle \pm 0.96}$ & $\mathbf{60.45}_{\scriptstyle \pm 0.46}$ & $\mathbf{70.67}_{\scriptstyle \pm 0.19}$ & $\mathbf{66.28}_{\scriptstyle \pm 0.56}$ & $\mathbf{92.47}_{\scriptstyle \pm 0.65}$ & $\mathbf{84.24}_{\scriptstyle \pm 0.87}$ & $\mathbf{76.60}_{\scriptstyle \pm 0.05}$ & $\mathbf{70.69}_{\scriptstyle \pm 0.17}$ & $\mathbf{71.29}_{\scriptstyle \pm 0.44}$ & $\mathbf{63.18}_{\scriptstyle \pm 0.84}$ \\
\bottomrule
\end{tabular}
}
\end{table*}
\plusmethod achieves strongest performances in Table~\ref{tab:magb-node-baselines}, reaching 71.29\% average accuracy and 63.18\% average macro F1. Compared with LGMRec, the strongest MAG backbone reference, \plusmethod improves the average accuracy by 1.27 points and the average macro F1 by 2.86 points. These gains indicate that GraphMNL remains effective under the sampled evaluation protocol, especially on macro F1, where reliable branch guidance can better benefit hard or underrepresented classes.

The control variants further show that the improvement cannot be attributed to branch fusion alone. GraphMNL outperforms all three variants even before the enhanced branch library is introduced, suggesting that its advantage comes from the interaction between reliable branch selection and exclusion-based transfer. The larger macro F1 margin is also consistent with the purpose of exclusion guidance: it improves how inferior branches rank classes outside the target while leaving target-class learning to supervised objectives.

The dataset-level pattern provides additional evidence for this mechanism. The consistent advantage of \plusmethod suggests that exposing branch-specific prediction paths and selectively transferring reliable exclusion knowledge are crucial for adapting to heterogeneous MAG semantics.

\subsection{Mechanism and Stress Behavior}

To answer \textbf{Q2 Stress and Mechanism}, the stress experiments examine whether graph-aware exclusion guidance remains effective when branch reliability is disrupted. The goal is to verify the underlying mechanism, rather than to claim uniform gains across all perturbation settings.

\begin{table}[t]
\caption{Synthetic MAG stress summary.}
\label{tab:pilot-summary}
\centering
\small
\resizebox{\linewidth}{!}{%
\begin{tabular}{lcccc}
\toprule
Method & Clean Acc. & Clean F1 & Stress Acc. & Stress F1 \\
\midrule
Fusion & $68.21_{\scriptstyle \pm 0.95}$ & $67.49_{\scriptstyle \pm 1.08}$ & $61.43_{\scriptstyle \pm 0.82}$ & $60.91_{\scriptstyle \pm 0.94}$ \\
Positive KL & $\mathbf{68.57}_{\scriptstyle \pm 0.90}$ & $\mathbf{67.95}_{\scriptstyle \pm 1.02}$ & $61.31_{\scriptstyle \pm 0.78}$ & $60.76_{\scriptstyle \pm 0.92}$ \\
EG Conf & $68.45_{\scriptstyle \pm 0.84}$ & $67.85_{\scriptstyle \pm 0.96}$ & $61.39_{\scriptstyle \pm 0.73}$ & $60.80_{\scriptstyle \pm 0.88}$ \\
GraphMNL & $68.33_{\scriptstyle \pm 0.86}$ & $67.69_{\scriptstyle \pm 1.00}$ & $\mathbf{61.63}_{\scriptstyle \pm 0.84}$ & $\mathbf{61.11}_{\scriptstyle \pm 0.91}$ \\
\bottomrule
\end{tabular}}
\end{table}

\begin{figure}[t]
\centering
\includegraphics[width=\linewidth]{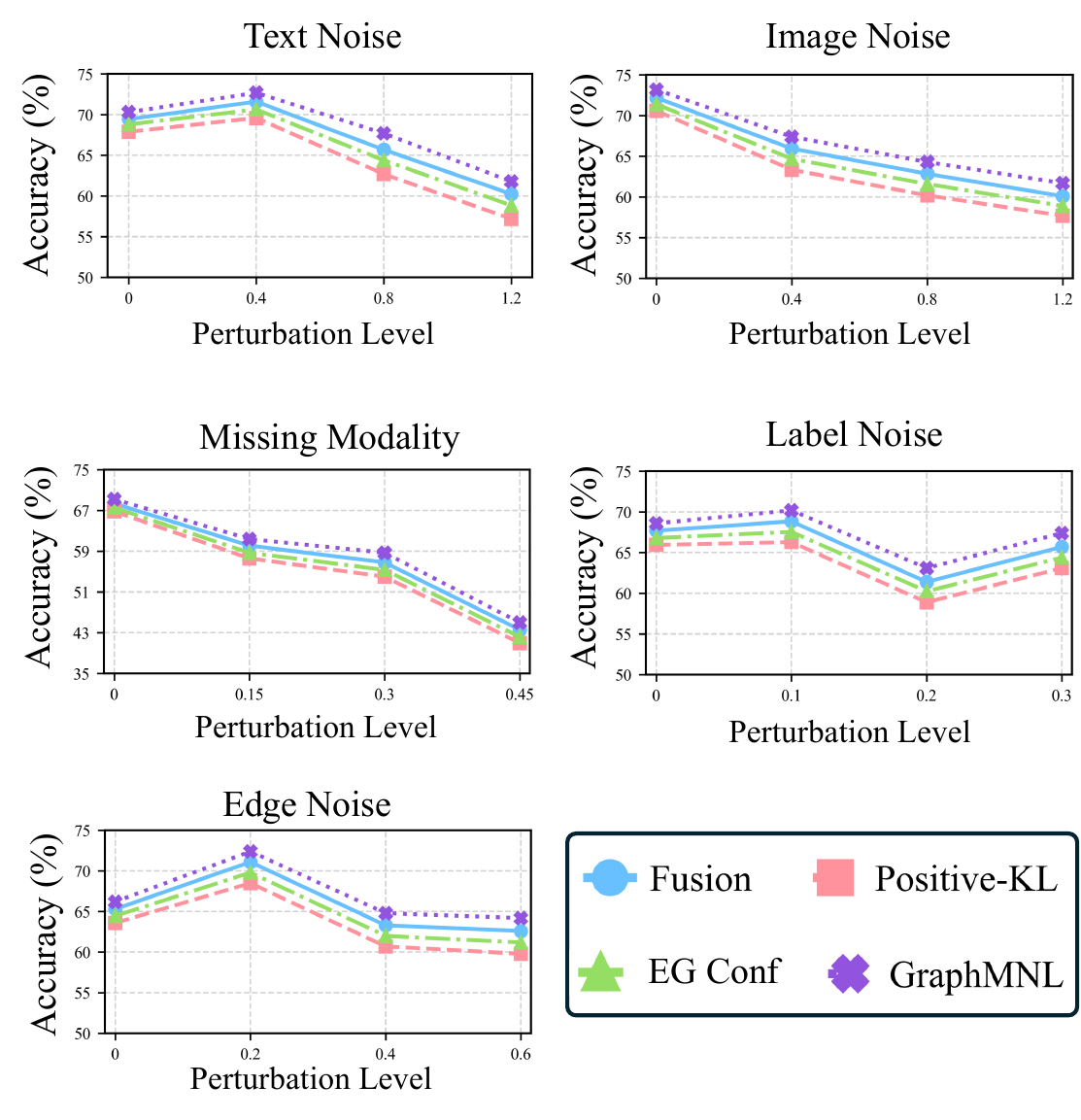}
\vspace{-0.8cm}
\caption{Synthetic accuracy trends across perturbation protocols.}
\vspace{-0.3cm}
\label{fig:stress-trends}
\end{figure}

Table~\ref{tab:pilot-summary} and Fig.~\ref{fig:stress-trends} reveal a clear synthetic stress pattern: Positive KL can perform best on clean data, but once perturbations make branch reliability uneven, GraphMNL becomes the strongest method. Its stress average improves over EG Conf by 0.24 accuracy points and 0.31 macro F1 points, indicating that graph-aware teacher selection provides additional value beyond confidence-based exclusion guidance.

The transition from clean to stressed settings further clarifies when GraphMNL is most beneficial. When branch predictions are stable, full-posterior imitation can serve as a smoothing regularizer. However, when modality-specific semantics or structural signals become inconsistent, confidence becomes a less reliable teacher criterion. GraphMNL therefore evaluates whether a branch provides reliable exclusion semantics for the current node before applying guidance, making reliability arbitration a necessary step rather than an optional refinement.

\begin{table}[t]
\caption{Real MAG stress Summary.}
\label{tab:magb-stress}
\centering
\small
\resizebox{0.7\linewidth}{!}{%
\begin{tabular}{lcc}
\toprule
Method & Stress Acc. & Stress F1 \\
\midrule
Fusion & $\mathbf{57.20}_{\scriptstyle \pm 0.96}$ & $\mathbf{37.44}_{\scriptstyle \pm 0.74}$ \\
Positive KL & $56.37_{\scriptstyle \pm 0.88}$ & $36.33_{\scriptstyle \pm 0.71}$ \\
EG Conf & $56.36_{\scriptstyle \pm 0.99}$ & $36.65_{\scriptstyle \pm 0.82}$ \\
GraphMNL & $57.18_{\scriptstyle \pm 1.08}$ & $37.41_{\scriptstyle \pm 0.95}$ \\
\bottomrule
\end{tabular}}
\end{table}

\begin{table*}[t]
\caption{Real stress breakdown for GraphMNL under challenging simulation settings.}
\label{tab:magb-stress-detail}
\centering
\small
\setlength{\tabcolsep}{3pt}
\resizebox{0.9\textwidth}{!}{%
\begin{tabular}{lcccccccccc}
\toprule
Method & \multicolumn{2}{c}{Text Noise} & \multicolumn{2}{c}{Image Noise} & \multicolumn{2}{c}{Missing Modality} & \multicolumn{2}{c}{Edge Drop} & \multicolumn{2}{c}{Avg.} \\
\cmidrule(lr){2-3}\cmidrule(lr){4-5}\cmidrule(lr){6-7}\cmidrule(lr){8-9}\cmidrule(lr){10-11}
& Acc. & F1 & Acc. & F1 & Acc. & F1 & Acc. & F1 & Acc. & F1 \\
\midrule
Fusion & $59.99_{\scriptstyle \pm 0.72}$ & $\mathbf{39.78}_{\scriptstyle \pm 0.93}$ & $\mathbf{59.36}_{\scriptstyle \pm 0.92}$ & $\mathbf{39.28}_{\scriptstyle \pm 0.86}$ & $49.83_{\scriptstyle \pm 0.56}$ & $31.54_{\scriptstyle \pm 0.73}$ & $\mathbf{59.61}_{\scriptstyle \pm 0.44}$ & $\mathbf{39.17}_{\scriptstyle \pm 0.36}$ & $\mathbf{57.20}_{\scriptstyle \pm 0.60}$ & $\mathbf{37.44}_{\scriptstyle \pm 0.29}$ \\
Positive KL & $59.04_{\scriptstyle \pm 0.60}$ & $38.28_{\scriptstyle \pm 0.97}$ & $58.61_{\scriptstyle \pm 0.50}$ & $37.90_{\scriptstyle \pm 0.45}$ & $49.31_{\scriptstyle \pm 0.97}$ & $31.16_{\scriptstyle \pm 0.27}$ & $58.53_{\scriptstyle \pm 0.70}$ & $37.96_{\scriptstyle \pm 0.33}$ & $56.37_{\scriptstyle \pm 0.55}$ & $36.33_{\scriptstyle \pm 0.76}$ \\
EG Conf & $59.13_{\scriptstyle \pm 0.71}$ & $38.85_{\scriptstyle \pm 0.83}$ & $58.00_{\scriptstyle \pm 0.55}$ & $37.77_{\scriptstyle \pm 0.27}$ & $49.49_{\scriptstyle \pm 0.27}$ & $31.41_{\scriptstyle \pm 0.57}$ & $58.81_{\scriptstyle \pm 0.41}$ & $38.59_{\scriptstyle \pm 0.68}$ & $56.36_{\scriptstyle \pm 0.73}$ & $36.65_{\scriptstyle \pm 0.28}$ \\
GraphMNL & $\mathbf{60.01}_{\scriptstyle \pm 0.78}$ & $39.46_{\scriptstyle \pm 0.94}$ & $58.77_{\scriptstyle \pm 0.57}$ & $38.63_{\scriptstyle \pm 0.75}$ & $\mathbf{50.71}_{\scriptstyle \pm 0.25}$ & $\mathbf{32.66}_{\scriptstyle \pm 0.88}$ & $59.21_{\scriptstyle \pm 0.44}$ & $38.89_{\scriptstyle \pm 0.40}$ & $57.18_{\scriptstyle \pm 0.56}$ & $37.41_{\scriptstyle \pm 0.38}$ \\
\bottomrule
\end{tabular}}
\end{table*}

Tables~\ref{tab:magb-stress} and~\ref{tab:magb-stress-detail} present the real MAG stress results and define the boundary of this claim. Fusion is slightly higher on the averaged stress row, with a difference of at most 0.03 points, suggesting that broad real-world perturbations do not always guarantee a uniform advantage for exclusion guidance.  GraphMNL achieves the strongest text-noise accuracy and the best results on both missing-modality metrics, where branch semantics are more uneven and reliable teacher-student asymmetry is easier to identify.
Overall,  GraphMNL is most useful when branches fail in different ways and one branch can guide another by ranking unlikely classes. Text noise and missing modality create clearer asymmetry between reliable and inferior branches, whereas image noise and edge dropout may weaken multiple prediction paths simultaneously. In such cases, simple fusion can remain competitive.

\subsection{Ablation Study}

To answer \textbf{Q3 Ablation}, we evaluate GraphMNL variants that remove one key module at a time. Reliable branch selection is replaced with confidence-based teacher selection, stability gating with a fixed transfer weight, and exclusion guidance with Positive KL over all classes.

\begin{table}[t]
\caption{Ablation test on GraphMNL by removing key modules.}
\label{tab:plus-ablation}
\centering
\small
\resizebox{\linewidth}{!}{%
\begin{tabular}{lcccc}
\toprule
\multirow{2}{*}{Methods} & \multicolumn{2}{c}{Movies} & \multicolumn{2}{c}{Reddit S} \\
\cmidrule(lr){2-3}\cmidrule(lr){4-5}
& Acc. & F1 & Acc. & F1 \\
\midrule
\vspace{1mm}
GraphMNL                     & 44.05$_{\scriptstyle \pm 0.88}$ & 31.60$_{\scriptstyle \pm 0.12}$ & 91.95$_{\scriptstyle \pm 0.74}$ & 83.55$_{\scriptstyle \pm 0.39}$ \\
\vspace{1mm}
w/o Stability Gating               & 43.71$_{\scriptstyle \pm 0.92}$ & 30.03$_{\scriptstyle \pm 0.51}$  & 91.41$_{\scriptstyle \pm 0.83}$  & 82.84$_{\scriptstyle \pm 0.61}$  \\
\vspace{1mm}
w/o Exclusion Guidance             & 43.25$_{\scriptstyle \pm 1.03}$ & 26.01$_{\scriptstyle \pm 1.22}$  & 90.52$_{\scriptstyle \pm 0.93}$ & 80.06$_{\scriptstyle \pm 1.01}$ \\
w/o Reliable Selection             & 42.67$_{\scriptstyle \pm 0.63}$ & 14.03$_{\scriptstyle \pm 0.64}$ & 85.67$_{\scriptstyle \pm 0.83}$ & 68.32$_{\scriptstyle \pm 0.83}$ \\
\bottomrule
\end{tabular}
}
\end{table}

Table~\ref{tab:plus-ablation} shows a clear hierarchy of module contributions. Removing graph-aware reliable branch selection causes the largest degradation: Movies drops by 1.38 accuracy points and 17.57 macro F1 points. The much larger F1 decline indicates that confidence alone is unreliable for rare classes, where structural shortcuts may make a branch appear trustworthy.
Removing exclusion guidance gives the second-largest penalty, reducing macro F1 by 5.6 points on Movies and 3.6 points on Reddit S, with smaller accuracy drops. It supports the target-preserving role of modules, where the teacher reshapes non-target classes without enforcing full-posterior imitation. Removing stability gating has the smallest but still visible effect, and the larger Movies drop suggests that volatile branch predictions benefit from softer transfer scheduling.

\subsection{Hyperparameter Sensitivity}
To answer \textbf{Q4 Hyperparameter Sensitivity}, we study the exclusion temperature and reliability gating temperature, which control the softness of non-target class rankings and the selectivity of cross-branch guidance, respectively. Figure~\ref{fig:hyperparam} visualize the divergent results from different hyperparameter combinations.
\begin{figure}[t]
\centering
\includegraphics[width=\linewidth]{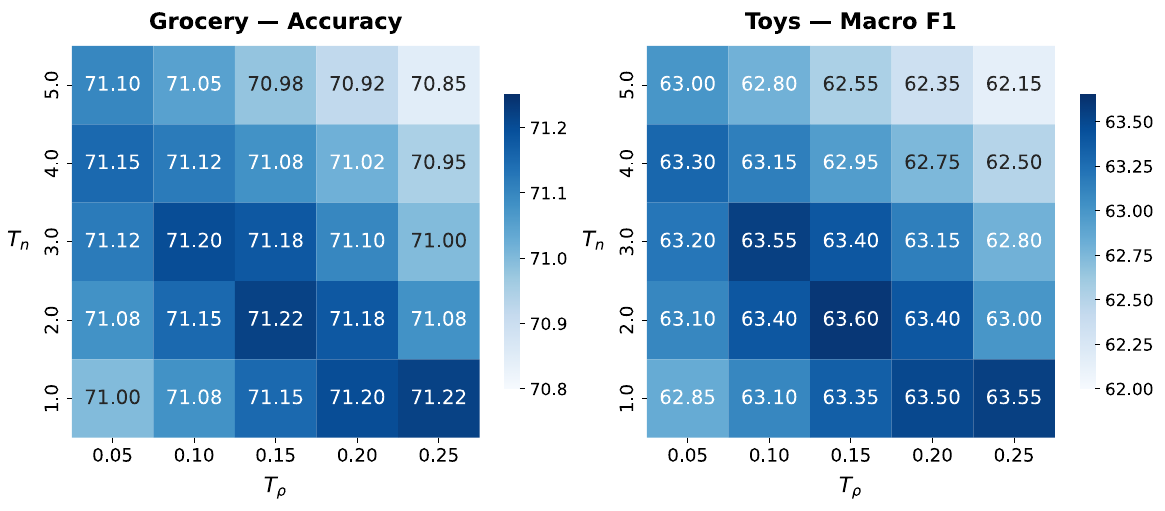}
\caption{Hyperparameter sensitivity of GraphMNL to exclusion temperature $T_n$ and gating sharpness $T_{\rho}$.}
\label{fig:hyperparam}
\end{figure}
The results reveal a clear pattern. A high exclusion temperature smooths the teacher distribution, making the exclusion signal conservative and requiring stronger gating to filter weak transfer. 
A moderate temperature preserves useful rankings among unlikely classes and achieves the strongest region with moderate gating. 
When the temperature is low, the exclusion signal becomes sharper, so smoother gating helps prevent overconfident non-target classes from dominating training.
Overall, GraphMNL is not brittle to exact temperature choices. Grocery accuracy and Toys macro F1 vary only slightly across the grid, suggesting that the method benefits from moderate exclusion sharpness and gating selectivity rather than a fragile tuning optimum.

\subsection{Complexity Analysis}

We evaluate inference complexity and memory usage. In Table~\ref{tab:complexity}, $N$ and $E$ represent the numbers of nodes and sparse edges, $M$ denotes the number of modalities, $R$ denotes relation or filter views, $B$ and $B_g$ denote the total branches and graph-based branches, and $L$, $d$, and $C$ denote graph layers, hidden dimension, and class number. For baseline-specific terms, $S$ refers to the sampled contrastive state in DMGC, while $G$ denotes the expert number in UniGraph2.

\begin{table}[H]
\caption{Complexity analysis on methods' inference and memory sizes.}
\label{tab:complexity}
\centering
\footnotesize
\vspace{-0.2cm}
\setlength{\tabcolsep}{3pt}
\renewcommand{\arraystretch}{1.16}
\begin{tabular}{@{}p{0.18\linewidth}p{0.43\linewidth}p{0.31\linewidth}@{}}
\toprule
Method & Infer Complexity & Memory Complexity \\
\midrule
MMGCN & $\mathcal{O}(M L(E d + N d^2))$ & $\mathcal{O}(E + M N d)$ \\
\addlinespace[2pt]
MGAT & $\mathcal{O}(M L(E d + N d^2))$ & $\mathcal{O}(E + M N d)$ \\
\addlinespace[2pt]
DMGC & $\mathcal{O}(M R L N^2 d)$ & $\mathcal{O}(N^2 + M R N d + S^2)$ \\
\addlinespace[2pt]
DGF & $\mathcal{O}(L E d + N^2 d + L d^3)$ & $\mathcal{O}(E + N^2 + N d + d^2)$ \\
\addlinespace[2pt]
UniGraph2 & $\mathcal{O}(L E d + N G d^2)$ & $\mathcal{O}(N^2 + N G d)$ \\
\addlinespace[2pt]
\method & $\mathcal{O}(B_g L(Ed + Nd^2) + BNC)$ & $\mathcal{O}(E + BN(d + C))$ \\
\bottomrule
\end{tabular}
\end{table}

GraphMNL does not rely on dense $N\times N$ graph objectives. During inference, teacher selection and exclusion loss are discarded, leaving only sparse branch forwarding and branch-logit aggregation. Therefore, its memory cost scales linearly with sparse edges, branch representations, and branch logits. This analysis indicate that the trained branch ensemble retains the learned reliability behavior, while the deployed model no longer needs label-aware teachers or exclusion targets.

\section{Conclusion}

This paper introduced \method, a graph-aware negative learning framework for MAGs where positive alignment can induce unreliable branch imitation. GraphMNL identifies reliable branches through graph-aware reliability arbitration and modality-informativeness estimation, then transfers exclusion knowledge over non-target classes instead of copying the full teacher posterior. This preserves supervised target learning while improving weaker branches' non-target rankings. Empirical results show that \plusmethod improves over the strongest baselines and also validate the effectiveness of key modules. 

\bibliographystyle{IEEEtran}
\bibliography{GraphMNL}

\end{document}